\def\citeapos#1{\citeauthor{#1} (\citeyear{#1})}
\def\BState{\State\hskip-\ALG@thistlm}
\providecommand{\U}[1]{\protect\rule{.1in}{.1in}}
\providecommand{\U}[1]{\protect\rule{.1in}{.1in}}
\newtheorem{theorem}{Theorem}
\newtheorem{lemma}{Lemma}
\theoremstyle{remark}
\begin{document}
\title[Data Driven Cost for DRO]{Data-driven Optimal Transport Cost Selection for Distributionally Robust Optimization}
\author{Blanchet, J.}
\address{Columbia University, Department of Statistics and Department of
Industrial Engineering \& Operations Research, New York, NY 10027, United States.}
\email{jose.blanchet@columbia.edu}
\author{Kang, Y.}
\address{Columbia University, Department of Statistics. New York, NY 10027,
United States.}
\email{yang.kang@columbia.edu}
\author{Zhang, F.} 
\address{Columbia University, Department of Industrial Engineering \& Operations Research. New York, NY 10027,
	United States.}
\email{fz2222@columbia.edu }

\author{Murthy, K.}
\address{Columbia University, Department of
Industrial Engineering \& Operations Research,Mudd Building, 500
W. 120 Street, New York, NY 10027, United States.}
\email{karthyek.murthy@columbia.edu}
\keywords{}
\date{\today }

	\begin{abstract}
		Recently, \citeapos{blanchet2016robust} showed that several machine
		learning algorithms, such as square-root Lasso, Support Vector
		Machines, and regularized logistic regression, among many others,
		can be represented exactly as distributionally robust optimization
		(DRO) problems. The distributional uncertainty is defined as a
		neighborhood centered at the empirical distribution. We
		propose a methodology which learns such neighborhood in a natural
		data-driven way.  We show rigorously that our framework encompasses
		adaptive regularization as a particular case. Moreover, we
		demonstrate empirically that our proposed methodology is able to
		improve upon a wide range of popular machine learning estimators.
	\end{abstract}
\maketitle
	\section{Introduction}
	
	A Distributionally Robust Optimization (DRO) problem takes the general form 
	\begin{equation}
	\min_{\beta }\max_{P\in \mathcal{U}_{\delta }}\mathbb{E}_{P}\left[ l\left( X,Y,\beta
	\right) \right] ,  \label{Eqn-DRO_origin}
	\end{equation}
	where $\beta $ is a decision variable, $(X,Y)$ is a random element, and
	$l(x,y,\beta) $ measures a suitable loss or cost incurred when $(X,Y)=(x,y)$ and
	the decision $\beta $ is taken. The expectation $\mathbb{E}_{P}[ \cdot ]$ is
	taken under the probability model $P$. The set  $\mathcal{U}_{\delta }$
	is called the distributional uncertainty set and it is indexed by the
	parameter $\delta >0$, which measures the size of the distributional
	uncertainty.
	\smallskip\newline
	The DRO problem is said to be \textit{data-driven} if the uncertainty
	set $%
	\mathcal{U}_{\delta }$ is informed by empirical observations. One
	natural way to supply this information is by letting the
	\textquotedblleft center\textquotedblright\ of the uncertainty region
	be placed at the empirical measure, $P_{n}$, induced by a data set
	$\{X_{i},Y_{i}\}_{i=1}^{n}$, which represents an empirical sample of
	realizations of $W$
	. In order to emphasize the data-driven nature of a DRO
	formulation such as (%
	\ref{Eqn-DRO_origin}), when the uncertainty region is informed by an
	empirical sample, we write
	$\mathcal{U}_{\delta }=\mathcal{U}_{\delta }( P_{n}) $. To the best of
	our knowledge, the available data is utilized in the DRO literature
	only by defining the center of the uncertainty region
	$\mathcal{U}_\delta(P_n)$ as the empirical measure $P_n.$
	\smallskip\newline
	Our goal in this paper is to discuss a data-driven framework to inform
	the \textit{shape} of $\mathcal{U}_{\delta }( P_{n})
	$. Throughout this paper, we assume that the class of functions to
	fit, indexed by $\beta $%
	, is given and that a sensible loss function
	$l\left( x,y,\beta \right) $ has been selected for the problem at
	hand. Our contribution concerns the construction of the uncertainty
	region in a fully data-driven way and the implications of this design
	in machine learning applications. Before providing our construction,
	let us discuss the significance of data-driven DRO\ in the context of
	machine learning.
	\smallskip\newline
	Recently, \citeapos{blanchet2016robust} showed that many prevailing machine
	learning estimators can be represented exactly as a data-driven DRO
	formulation in (\ref{Eqn-DRO_origin}). For example, suppose that
	$X\in \mathbb{R}^{d}$ and $Y\in \{-1,1\}$. Further, let
	$l( x,y,\beta) =\log ( 1+\exp ( -y\beta ^{T}x))$ be the
	log-exponential loss associated to a logistic regression model where
	$Y\sim Ber( 1/(1+\exp ( -\beta _{\ast }^{T}x)),$ and
	$\beta _{\ast }$ is the underlying parameter to learn. Then, given a set of
	empirical samples
	$\mathcal{D}_{n}=\left\{ \left( X_{i},Y_{i}\right) \right\}
	_{i=1}^{n}$, and a judicious choice of the distributional uncertainty set
	$%
	\mathcal{U}_{\delta }\left( P_{n}\right) $, \citeapos{blanchet2016robust}
	shows that%
	\begin{equation}
	\min_{\beta }\max_{P\in \mathcal{U}_{\delta }\left( P_{n}\right)
	}\mathbb{E}_{P}[l( X,Y,\beta) ]=\min_{\beta }\left( \mathbb{E}_{P_{n}}[l(X,Y,\beta)
	]+\delta \left\Vert \beta \right\Vert _{p}\right), 
	\label{DR_Las}
	\end{equation}
	where $\left\Vert \cdot \right\Vert _{p}$ is the $\ell_{p}-$norm in
	$\mathbb{R}^{d}$ for $p\in \lbrack 1,\infty )$ and
	$\mathbb{E}_{P_{n}}[l(X,Y,\beta) ]=n^{-1}\sum_{i=1}^{n}l(
	X_{i},Y_{i},\beta ).$\newline 
	\smallskip\newline
	The definition of $\mathcal{U}_{\delta }\left( P_{n}\right) $ turns
	out to be informed by the dual norm $\Vert \cdot \Vert _{q}$ with
	$1/p+1/q=1$. If $p=1$ we see that (\ref{DR_Las}) recovers
	$L_1$ regularized logistic regression (see
	\citeapos{friedman2001elements}). Other estimators such as Support Vector
	Machines and sqrt-Lasso are shown in
	\citeapos{blanchet2016robust} to admit DRO representations analogous to
	(\ref{DR_Las}) -- provided that the loss function and the uncertainty
	region are judiciously chosen. Note that the parameter $\delta$ in
	$\mathcal{U}_{\delta}(P_n)$ is precisely the regularization parameter
	in the right hand side of (\ref{DR_Las}). So, the data-driven DRO
	representation (\ref {DR_Las}) provides a direct interpretation of the
	regularization parameter as the size of the probabilistic uncertainty
	around the empirical evidence.
	\smallskip\newline
	An important element to all of the DRO representations obtained in
	\citeapos{blanchet2016robust} is that the design of the uncertainty region
	$\mathcal{U}_{\delta}( P_{n}) $ is based on optimal transport
	theory. In particular, we have that
	\begin{equation}
	\mathcal{U}_{\delta }\left( P_{n}\right) =\{P:D_{c}( P,P_{n})
	\leq \delta \},  \label{USet}
	\end{equation}%
	and $D_{c}( P,P_{n}) $ is the minimal cost of rearranging (i.e.
	transporting the mass of) the distribution $P_{n}$ into the
	distribution $P$. The rearrangement mechanism has a transportation
	cost $c( u,w) \geq 0$ for moving a unit of mass from location $u$ in
	the support of $P_{n}$ to location $w$ in the support of $P$. For
	instance, in the setting of (\ref{DR_Las}) we have that
	\begin{equation}
	c\big( ( x,y) ,( x^{\prime },y^{\prime }) \big)
	=\left\Vert x-x^{\prime }\right\Vert _{q}^{2}I\left( y=y^{\prime }\right)
	+\infty \cdot I\left( y\neq y^{\prime }\right) .  \label{Cost}
	\end{equation}%
	In the end, as we discuss in Section \ref{Sec_OT}, $D_{c}( P,P_{n}) $
	can be easily computed as the solution of a linear programming (LP)
	problem which is known as Kantorovich's problem (see
	\citeapos{villani2008optimal}).
	\smallskip\newline
	Other discrepancy notions between probability models have been
	considered, typically using the Kullback-Leibler divergence and other
	divergence based notions \citeapos{hu2013kullback}. Using divergence (or
	likelihood ratio) based discrepancies to characterize the uncertainty
	region $\mathcal{U}_{\delta}( P_{n}) $ forces the models
	$P\in \mathcal{U}_{\delta }( P_{n}) $ to share the same support with
	$P_{n}$, which may restrict generalization properties of a DRO-based
	estimator,
	and such restriction may induce overfitting problem (see the
	discussions in \citeapos{esfahani2015data} and \citeapos{blanchet2016robust}).
	\smallskip\newline
	In summary, data-driven DRO via optimal transport has been shown to
	encompass a wide range of prevailing machine learning
	estimators. However, so far the cost function $c\left( \cdot \right) $
	has been taken as a given, and not chosen in a data-driven way.
	\smallskip\newline
	Our main contribution in this paper is to propose a comprehensive
	approach for designing the uncertainty region
	$\mathcal{U}_{\delta }( P_{n}) $ in a fully data-driven way, using the
	convenient role of $c(\cdot) $ in the definition of the optimal
	transport discrepancy $D_{c}( P,P_{n}) $. Our modeling approach
	further underscores, beyond the existence of representations such as
	(\ref{DR_Las}), the convenience of working with an optimal transport
	discrepancy for the design of data-driven DRO machine learning
	estimators. In other words, because one can select $c( \cdot ) $ in a
	data driven way, it is sensible to use our data-driven DRO formulation
	even if one is not able to simplify the inner optimization in order to
	achieve a representation such as (\ref{DR_Las}).
	\newline
	Our idea is to apply metric-learning procedures to estimate
	$c( \cdot) $ from the training data. Then, use such data-driven
	$c( \cdot ) $ in the definition of $D_{c}( P,P_{n}) $ and the
	construction $\mathcal{U}_{\delta }( P_{n}) $ in (\ref{USet}).
	Finally, solve the DRO problem (\ref{Eqn-DRO_origin}), using
	cross-validation to choose $\delta $.
	\newline 
	The intuition behind our proposal is the following. By using a metric learning
	procedure we are able to calibrate a cost function $c\left( \cdot \right) $
	which attaches relatively high transportation costs to $\left( u,w\right) $
	if transporting mass between these locations substantially impacts
	performance (e.g. in the response variable, so increasing the expected
	risk). In turn, the adversary, with a given budget $\delta $, will carefully
	choose the data which is to be transported. The mechanism will then induce
	enhanced out-of-sample performance focusing precisely on regions of
	relevance, while improving generalization error.
	\smallskip\newline
	One of the challenges for the implementation of our idea is to
	efficiently solve (\ref{Eqn-DRO_origin}). We address this challenge by
	proposing a stochastic gradient descent algorithm which takes
	advantage of a duality representation and fully exploits the nature of
	the LP structure embedded in the definition of $D_{c}( P,P_{n}) $,
	together with a smoothing technique.\smallskip\newline Another challenge
	consists in selecting the type of cost $c( \cdot ) $ to be used in
	practice, and the methodology to fit such cost. To cope with this
	challenge, we rely on metric-learning procedures. We do not
	contribute any novel metric learning methodology; rather, we discuss
	various parametric cost functions and methods developed in the
	metric-learning literature. And we discuss their use in the context of
	fully data-drive DRO formulations for machine learning problems --
	which is what we propose in this paper. The choice of $c( \cdot ) $
	ultimately will be influenced by the nature of the data and the
	application at hand. For example, in the setting of image recognition,
	it might be natural to use a cost function related to similarity
	notions.
	\smallskip\newline
	In addition to discussing intuitively the benefits of our approach in
	Section \ref{Sec_Intuit}, we also show that our methodology provides a
	way to naturally estimate various parameters in the setting of
	adaptive regularization. For example, Theorem
	\ref{Thm-DRO-Rep-Adaptive-Reg} below, shows that choosing
	$c( \cdot ) $ using a suitable weighted norm, allows us to
	recover an adaptive regularized ridge regression
	estimator \citeapos{ishwaran2014geometry}. In turn, using standard
	techniques from metric learning we can estimate
	$c( \cdot ) $. Hence, our technique connects metric
	learning tools to estimate the parameters of adaptive regularized
	estimators.\smallskip\newline 
	More broadly, we compare the performance of our procedure with a number of
	alternatives in the setting of various data sets and show that our approach
	exhibits consistently superior performance.
	\section{Data-Driven DRO: Intuition and Interpretations\label{Sec_Intuit}}
	One of the main benefits of DRO formulations such as (\ref{Eqn-DRO_origin})
	and (\ref{DR_Las}) is their interpretability. For example, we can readily
	see from the left hand side of (\ref{DR_Las}) that the regularization
	parameter corresponds precisely to the size of the \textit{data-driven}
	distributional uncertainty.
	\smallskip\newline
	The data-driven aspect is important because we can employ statistical
	thinking to optimally characterize the size of the uncertainty,
	$\delta $. This readily implies an optimal choice of the
	regularization parameter, as explained in \citeapos{blanchet2016robust},
	in settings such as (\ref{DR_Las}). Elaborating, we can interpret
	$\mathcal{U}_{\delta }\left( P_{n}\right) $ as the set of plausible
	variations of the empirical data, $P_{n}$. Consequently, for instance,
	in the linear regression setting leading to (\ref{DR_Las}), the
	estimate
	$\beta _{P}=\arg \min_{\beta }\mathbb{E}_{P}\left( l\left( X,Y,\beta \right)
	\right) $ is a plausible estimate of the regression parameter
	$\beta _{\ast } $ as long as
	$P\in \mathcal{U}_{\delta }\left( P_{n}\right) $. Hence, the set
	\begin{equation*}
	\Lambda _{\delta }\left( P_{n}\right) =\{\beta _{P}:P\in \mathcal{U}_{\delta
	}\left( P_{n}\right) \}
	\end{equation*}%
	is a natural confidence region for $\beta _{\ast }$ which is non-decreasing
	in $\delta $. Thus, a statistically minded approach for choosing $\delta $
	is to fix a confidence level, say $\left( 1-\alpha \right) $, and choose an
	optimal $\delta $ ($\delta _{\ast }\left( n\right) $) via 
	\begin{equation}
	\delta _{\ast }\left( n\right) :=\inf \{\delta :P\left( \beta _{\ast }\in
	\Lambda _{\delta }\left( P_{n}\right) \right) \geq 1-\alpha \}.
	\label{Opt_delta}
	\end{equation}%
	Note that the random element in
	$P\left( \beta _{\ast }\in \Lambda _{\delta }\left( P_{n}\right)
	\right) $ is given by $P_{n}$. In \citeapos{blanchet2016robust} this
	optimization problem is solved asymptotically as
	$n\rightarrow \infty $ under standard assumptions on the data
	generating process. If the underlying model is correct, one would
	typically obtain, as in \citeapos{blanchet2016robust}, that
	$\delta _{\ast }( n) \rightarrow 0$ at a suitable rate. For instance,
	in the linear regression setting corresponding to (\ref{DR_Las}), if
	the data is i.i.d. with finite variance and the linear regression
	model holds then $%
	\delta _{\ast }(n) =\chi _{_{1-\alpha} }\left( 1+o\left( 1\right)
	\right) /n$ as $n\rightarrow \infty $ (where $\chi _{_\alpha }$ is the
	$%
	\alpha $ quantile of an explicitly characterized distribution).
	\smallskip\newline
	
	 In practice, one can also choose $\delta $ by
	cross-validation. The work of \citeapos{blanchet2016robust} compares the
	asymptotically optimal choice $\delta_\ast(n)$ against
	cross-validation, concluding that the performance is comparable in the 
	experiments performed. In this paper, we use cross validation to
	choose $ \delta $, but the insights behind the limiting behavior of
	(\ref{Opt_delta}) are useful, as we shall see, to inform the design of
	our algorithms.
	\smallskip\newline
	More generally, the DRO\ formulation (\ref{Eqn-DRO_origin}) is
	appealing because the distributional uncertainty endows the estimation
	of $\beta $ directly with a mechanism to enhance generalization
	properties. To wit, we can interpret (\ref{Eqn-DRO_origin}) as a game
	in which we (the outer player) choose a decision $\beta $, while the
	adversary (the inner player) selects a model which is a perturbation,
	$P$, of the data (encoded by $P_{n}$%
	). The amount of the perturbation is dictated by the size of $\delta $
	which, as discussed earlier, is data driven. But the type of
	perturbation and how the perturbation is measured is dictated by
	$D_{c}(P,P_{n}) $. It makes sense to inform the design of
	$D_{c}( \cdot ) $ using a data-driven mechanism, which is our goal in
	this paper. { The intuition is to allow the types of
		perturbations which focus the effort and budget of the adversary
		mostly on out-of-sample exploration over regions of relevance.}
	\smallskip\newline
	 The type of benefit that is obtained by informing
	$D_{c}\left( P,P_{n}\right) $ with data is illustrated in Figure 1(a)
	below.
	\begin{figure}[th]
		\vskip 0.2in
		\par
		\begin{center}
			\centerline{\includegraphics[height = 6cm]{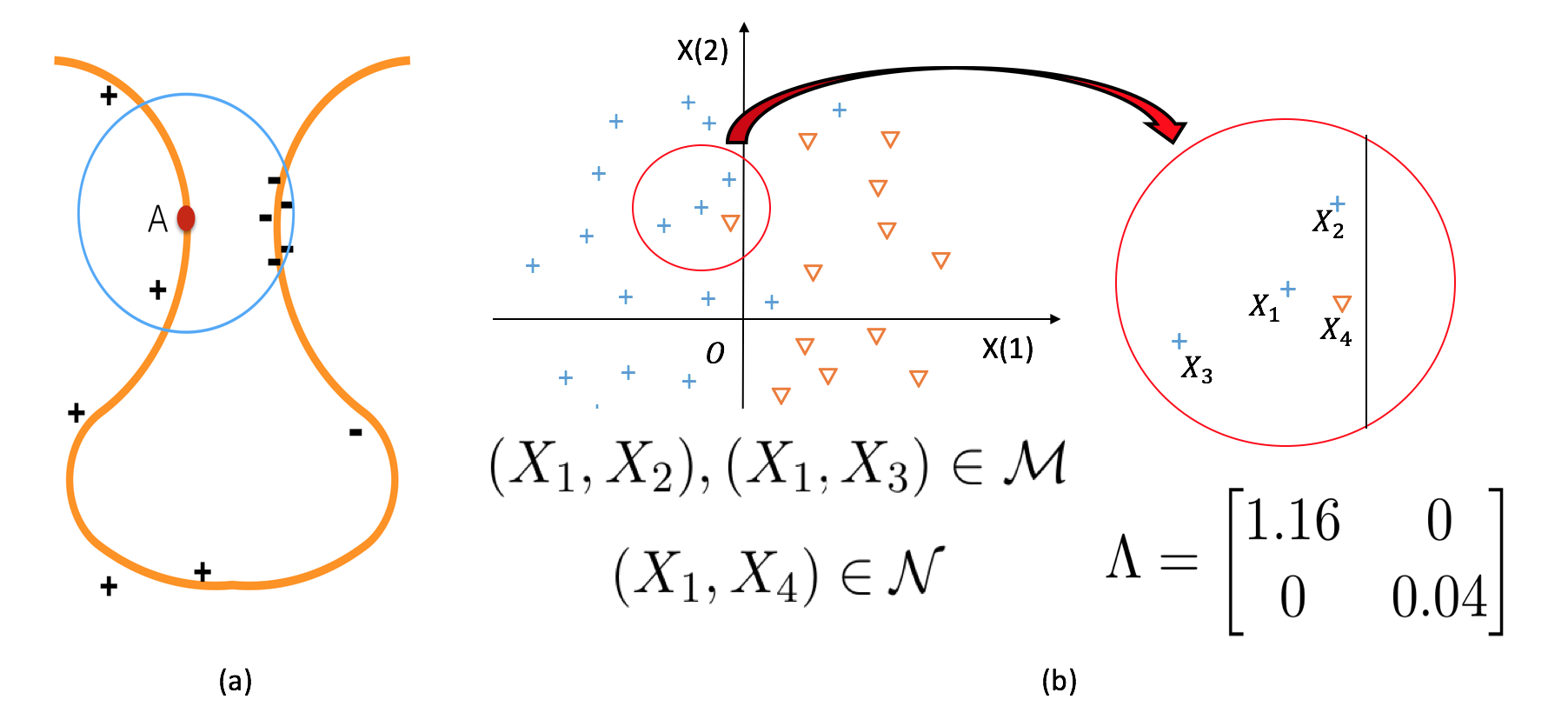}} 
		\end{center}
		\par
		\vskip -0.2in
		\caption{Stylized examples illustrating the need for data-driven cost
			function.
		}
	\end{figure}
	Figure 1(a) illustrates a classification task. The data roughly lies
	on a lower dimensional non-linear manifold. Some data which is
	classified with a negative label is seen to be \textquotedblleft
	close\textquotedblright\ to data which is classified with a positive
	label when seeing the whole space (i.e. $\mathbb{R}^{2}$) as the
	natural ambient domain of the data. However, if we use a distance
	similar to the geodesic distance intrinsic to the manifold we would
	see that the negative instances are actually far from the positive
	instances. 
	So, the generalization properties of the algorithm would be enhanced
	relative to using a standard metric in the ambient space, because with
	a given budget $\delta $ the adversarial player would be
	allowed perturbations mostly along the intrinsic manifold where the
	data lies and instances which are surrounded (in the intrinsic metric)
	by similarly classified examples will naturally carry significant
	impact in testing performance.  A quantitative example to illustrate
	this point will be discussed in the sequel.

	\section{Background on Optimal Transport and Metric Learning
		Procedures\label{Sec_OT}}
	In this section we quickly review basic notions on
	optimal transport and metric learning methods so that we can define
	$D_{c}( P,P_{n}) $ and explain how to calibrate the function
	$c( \cdot ).$
	\subsection{Defining Optimal Transport Distances and Discrepancies}
	Assume that the cost function
	$c:\mathbb{R}^{d+1}\times \mathbb{R}%
	^{d+1}\rightarrow \lbrack 0,\infty ]$ is lower semicontinuous. We also
	assume that $c(u,v)=0$ if and only if $u=v$. Given two distributions
	$P$ and $Q$, with supports $\mathcal{S}_{P}$ and $%
	\mathcal{S}_{Q}$, respectively, we define the optimal transport
	discrepancy, $D_{c}$, via
	\begin{equation}\label{Discrepancy_Def}
	D_{c}\left( P,Q\right) =\inf \big\{\mathbb{E}_{\pi }\left[ c( U,V) \right]
	:\pi \in \mathcal{P}\left( \mathcal{S}_{_P}\times \mathcal{S}_{_Q}\right) ,%
	\text{ }\pi_{_U}=P,\text{ }\pi_{_V}=Q \big\},
	\end{equation}%
	where $\mathcal{P}( \mathcal{S}_{_P}\times \mathcal{S}_{_Q}) $ is the
	set of probability distributions $\pi $ supported on $\mathcal{S}%
	_{P}\times \mathcal{S}_{_Q}$, and $\pi_{_U}$ and $\pi_{_V}$ denote the
	marginals of $U$ and $V$ under $\pi $, respectively. Because
	$c( \cdot ) $ is non-negative we have that $D_{c}( P,Q) \geq 0$.
	Moreover, requiring that $c( u,v) =0$ if and only if $u=v$ guarantees
	that $D_{c}( P,Q) =0$ if and only $P=Q$. If, in addition,
	$c( \cdot ) $ is symmetric (i.e.  $c( u,v) =c( v,u) $), and there
	exists $\varrho \geq 1$ such that
	$c^{1/\varrho }( u,w) \leq c^{1/\varrho }( u,v) +c^{1/\varrho }(v,w) $
	(i.e.  $c^{1/\varrho }( \cdot) $ satisfies the triangle inequality)
	then it can be easily verified (see \citeapos{villani2008optimal}) that $%
	D_{c}^{1/\varrho }\left( P,Q\right) $ is a metric. For example, if
	$c( u,v) =\Vert u-v\Vert _{q}^{\varrho }$ for $q\geq 1$ (where $%
	\Vert u-v\Vert _{q}$ denotes the $l_{q}$ norm in $\mathbb{R}%
	^{d+1} $) then $D_{c}( \cdot ) $ is known as the Wasserstein distance
	of order $\varrho $.  Observe that (\ref{Discrepancy_Def}) is a linear
	program in the variable $\pi.$
	
	\subsection{On Metric Learning Procedures}
	In order to keep the discussion focused, we use a few metric learning
	procedures, but we emphasize that our approach can be used in combination
	with virtually any method in the metric learning literature, see the survey
	paper \citeapos{bellet2013survey} that contains additional discussion on metric learning
	procedures. The procedures that we consider, as we shall see, can be seen to
	already improve significantly upon natural benchmarks. Moreover, as we shall
	see, these metric families can be related to adaptive regularization. This
	connection will be useful to further enhance the intuition of our procedure. 
	\subsubsection{The Mahalanobis Distance\label{Subsec_Mahala}}
	The Mahalanobis metric is defined as%
	\begin{equation*}
	d_{\Lambda }\left( x,x^{\prime }\right) =\left( \left( x-x^{\prime }\right)
	^{T}\Lambda \left( x-x^{\prime }\right) \right) ^{1/2},
	\end{equation*}%
	where $\Lambda$ is symmetric and positive semi-definite and we write
	$\Lambda \in PSD$. Note that $d_{\Lambda }( x,x^{\prime }) $ is the
	metric induced by the norm
	$\Vert x\Vert _{\Lambda }=\sqrt{x^{T}\Lambda x%
	}$. \smallskip\newline
	 For a discussion, assume that our data is of the form
	$\mathcal{D}_{n}=\{ (X_{i},Y_{i})\} _{i=1}^{n}$ and
	$Y_{i}\in \{-1,+1\}$. The prediction variables are assumed to be
	standardized. Motivated by applications such as social networks, in
	which there is a natural graph which can be used to connect instances
	in the data, we assume that one is given sets $\mathcal{M}$ and
	$\mathcal{N}$, where $\mathcal{M}$ is the set of the pairs that should
	be close (so that we can connect them) to each other, and
	$\mathcal{N}$, on contrary, is characterizing the relations that the
	pairs should be far away (not connected), we define them as
	\begin{eqnarray*}
		\mathcal{M} := \left\{ \left( X_{i},X_{j}\right) \text{ }|\text{ }X_{i}\text{
			and }X_{j}\text{ must connect}\right\},  \quad  \\
		\mathcal{N} :=\left\{ \left( X_{i},X_{j}\right) \text{ }|\text{ }X_{i}\text{
			and }X_{j}\text{ should not connect}\right\} .
	\end{eqnarray*}
	While it is typically assumed that $\mathcal{M}$ and $\mathcal{N}$ are
	given, one may always resort to $k$-Nearest-Neighbor ($k$-NN) method for the
	generation of these sets. This is the approach that we follow in our
	numerical experiments. But we emphasize that choosing any criterion for the
	definition of $\mathcal{M}$ and $\mathcal{N}$ should be influenced by the
	learning task in order to retain both interpretability and performance. 
	\smallskip\newline
	In our experiments we let $\left( X_{i},X_{j}\right) $ belong to $\mathcal{M}
	$ if, in addition to being sufficiently close (i.e. in the $k$-NN
	criterion), $Y_{i}=Y_{j}$. If $Y_{i}\neq Y_{j}$, then we have that $\left(
	X_{i},X_{j}\right) \in \mathcal{N}$.
	\newline 
	The work of \citeapos{xing2002distance}, one
	of the earlier reference on the subject, suggests considering 
	\begin{align}
	\min_{\Lambda\in PSD}& \sum_{\left( X_{i},X_{j}\right) \in \mathcal{M}%
	}d_{\Lambda}^{2}\left( X_{i},X_{j}\right)   \\
	\quad s.t.& \quad \sum_{\left( X_{i},X_{j}\right) \in \mathcal{N}}d_{\Lambda}^{2}\left(
	X_{i},X_{j}\right) \geq \bar{\lambda}.  
	\label{Eqn-Metric-Learn-Opt} 
	\end{align}
	In words, the previous optimization problem minimizes the total distance
	between pairs that should be connect, while keeping the pairs that should
	not connect well separated. The constant $\bar{\lambda}>0$ is somewhat
	arbitrary (given that $\Lambda $ can be normalized by $\bar{\lambda}$, we
	can choose $\bar{\lambda}=1$).
	\smallskip\newline 
	The optimization problem (\ref{Eqn-Metric-Learn-Opt}) is an LP problem on
	the convex cone $PSD$ and it has been widely studied. Since $\Lambda
	\in PSD,$ we can always write $\Lambda =LL^{T}$, and therefore  
	$d_{\Lambda}( X_{i},X_{j}) =\left\Vert
	X_{i}-X_{j}\right\Vert_{\Lambda}:=\left\Vert LX_{i}-LX_{j}\right\Vert
	_{2}.$
	There are various techniques which can be used to exploit the \textit{PSD}
	structure to efficiently solve (\ref{Eqn-Metric-Learn-Opt}); see, for
	example, \citeapos{xing2002distance} for a
	projection-based algorithm; or \citeapos{schultz2004learning}, which uses a factorization-based procedure;
	or the survey paper \citeapos{bellet2013survey} for the discussion of a wide range of
	techniques.
	\smallskip\newline 
	We have chosen formulation (\ref{Eqn-Metric-Learn-Opt}) to estimate $\Lambda 
	$ because it is intuitive and easy to state, but the topic of learning
	Mahalanobis distances is an active area of research and there are different
	algorithms which can be implemented (see \citeapos{li2016mahalanobis}).

	\subsubsection{Using Mahalanobis Distance in Data-Driven DRO
		\label{Sec-Mahab-DRO}} 
	Let us assume that the underlying data takes the form
	$\mathcal{D}%
	_{n}=\{ ( X_{i},Y_{i}) \} _{i=1}^{n}$, where $X_{i}\in R^{d}$ and
	$Y_{i}\in R$ and the loss function, depending on a decision variable
	$\beta \in R^{m}$, is given by $l( x,y,\beta) $. Note that we are not
	imposing any linear structure on the underlying model or in the loss
	function. Then, motivated by the cost function (\ref{Cost}), we may
	consider
	\begin{equation}
	c_{_\Lambda }\big( ( x,y) ,( x^{\prime },y^{\prime })
	\big) =d_{\Lambda }^{2}\left( x,x^{\prime }\right) I\left( y=y^{\prime
	}\right) +\infty I\left( y\neq y^{\prime }\right) ,  \label{Cost_CA}
	\end{equation}%
	for $\Lambda \in PSD$.  The infinite contribution in the definition of
	$c_{_\Lambda }$ (i.e.
	$\infty \cdot I\left( y\neq y^{\prime }\right) $) indicates that the
	adversarial player in the DRO formulation is not allowed to perturb
	the response variable.
	
	Even in this case, since the definitions of $\mathcal{M}$ and
	$\mathcal{N}$ depend on $W_{i}=\left( X_{i},Y_{i}\right) $ (in
	particular, on the response variable), cost function
	$c_{_\Lambda }( \cdot) $ (once $\Lambda $ is calibrated using, for
	example, the method discussed in the previous subsection), will be
	informed by the $Y_{i}$s. 
	Then, we estimate $\beta $ via%
	\begin{equation}
	\min_{\beta }\sup_{P:D_{c_{\Lambda }}\left( P,P_{n}\right) \leq \delta
	}\mathbb{E}[l( X,Y,\beta) ].  \label{Linear}
	\end{equation}%
	It is important to note that $\Lambda $ has been applied only to the
	definition of the cost function. 
	\smallskip\newline
	The intuition behind the formulation can be gained in the context
	of a logistic regression setting, see the example in Figure
	1(b): Suppose that $d=2$, and that $Y$ depends only on $X(1) $
	(i.e. the first coordinate of $X$).  Then, the metric
	learning procedure in (\ref{Eqn-Metric-Learn-Opt}) will
	induce a relatively low transportation cost across the $X( 2) $
	direction and a relatively high transportation cost in the $X( 1) $
	direction, whose contribution, being highly informative, is reasonably
	captured by the metric learning mechanism. Since the $X(1) $ direction
	is most impactful, from the standpoint of expected loss estimation,
	the adversarial player will reach a compromise, between transporting
	(which is relatively expensive) and increasing the expected loss
	(which is the adversary's objective). Out of this
	compromise the DRO procedure localizes the out-of-sample
	enhancement, and yet improves generalization.

	\subsubsection{Mahalanobis Metrics on a Non-Linear Feature Space}
	In this section, we consider the case in which the cost
	function is defined after applying a non-linear transformation,
	$\Phi :R^{d}\rightarrow R^{l}$, to the data. Assume that the data
	takes the form
	$\mathcal{D}_{n}=\left\{ \left( X_{i},Y_{i}\right) \right\}
	_{i=1}^{n}$, where $X_{i}\in \mathbb{R}^{d}$ and $Y_{i}\in \mathbb{R}
	$ and the loss
	function, depending on decision variable $\beta \in R^{m}$, is given
	by $l\left( x,y,\beta \right) $.
	Once again, motivated by the cost function (\ref{Cost}), we may define
	\begin{equation}
	c_{_\Lambda }^{\Phi }\big( ( x,y) ,( x^{\prime },y^{\prime
	}) \big) =d_{\Lambda }^{2}\left( \Phi \left( x\right) ,\Phi \left(
	x^{\prime }\right) \right) I\left( y=y^{\prime }\right) +\infty I\left(
	y\neq y^{\prime }\right) ,  \label{Cost_C_Phi}
	\end{equation}%
	for $\Lambda \in PSD$. To preserve the properties of a cost function
	(i.e.  non-negativity, lower semicontinuity and
	$c_{\Lambda }^{\Phi }\left( u,w\right) =0$ implies $u=w$), we assume
	that $\Phi \left( \cdot \right) $ is continuous and that
	$\Phi \left( w\right) =\Phi \left( u\right) $ implies that $w=u$. Then
	we can apply a metric learning procedure, such as the one described in
	(\ref%
	{Eqn-Metric-Learn-Opt}), to calibrate $\Lambda $. 
	The intuition is
	the same as the one provided in the linear case in Section
	\ref{Sec-Mahab-DRO}.
\
	\section{Data Driven Cost Selection and Adaptive Regularization\label%
		{Sec-Data-Driven-Cost}}
	In this section we establish a direct connection between our fully
	data-driven DRO procedure and adaptive regularization. Moreover, our main
	result here, together with our discussion from the previous section,
	provides a direct connection between the metric learning literature and
	adaptive regularized estimators. As a consequence, the methods from the
	metric learning literature can be used to estimate the parameter of
	adaptively regularized estimators.
	\smallskip\newline
	Throughout this section we consider again a data set of the form
	$\mathcal{D}%
	_{n}=\left\{ \left( X_{i},Y_{i}\right) \right\} _{i=1}^{n}$ with
	$X_{i}\in \mathbb{R}^{d}$ and $Y_{i}\in \mathbb{R}$. Motivated by the cost function
	(\ref{Cost}) we define the cost function $c_{_\Lambda}(\cdot)$ as in
	(\ref{Cost_CA}). 
	Using
	(\ref{Cost_CA}) we obtain the following result, which is proved in the appendix.
	\begin{theorem}[DRO Representation for Generalized Adaptive Regularization]
		\label{Thm-DRO-Rep-Adaptive-Reg} Assume that $\Lambda \in R^{d\times d}$ in (%
		\ref{Cost_CA}) is positive definite. Given the data set $\mathcal{D}_{n}$,
		we obtain the following representation 
		\begin{align}
		\min_{\beta }\max_{P:D_{c_{\Lambda }}\left( P,P_{n}\right) \leq \delta }
		\mathbb{E}_{P}^{1/2}\left[ \left( Y-X^{T}\beta \right) ^{2}\right]
		=\min_{\beta } \sqrt{\frac{1}{n}\sum_{i=1}^{n}\left(
			Y_{i}-X_{i}^{T}\beta\right)^{1/2}}+\sqrt{\delta }\left\Vert \beta
		\right\Vert _{\Lambda ^{-1}} .  \label{GAR_1}
		\end{align}
		Moreover, if $Y\in \left\{ -1,+1\right\} $ in the context of adaptive
		regularized logistic regression, we obtain the following representation 
		\begin{eqnarray}
		\min_{\beta }\max_{P:D_{c_{\Lambda }}\left( P,P_{n}\right)\leq \delta }\mathbb{E}%
		\left[ \log \left( 1+e^{ -Y(X^{T}\beta )} \right) \right]
		=\min_{\beta }\frac{1}{n}\sum_{i=1}^{n}\log \left( 1+e^{ 
			-Y_{i}(X_{i}^{T}\beta )} \right) +\delta \left\Vert \beta \right\Vert
		_{\Lambda ^{-1}}.  \label{GAR_2}
		\end{eqnarray}
	\end{theorem}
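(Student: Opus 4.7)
The natural strategy is to reduce the inner maximization to a finite-dimensional problem via strong duality for optimal transport (in the form of, e.g., Blanchet--Murthy or Gao--Kleywegt): for any upper semicontinuous loss $\ell(w)$,
\[
\sup_{P:\,D_c(P,P_n)\le\delta}\mathbb{E}_P[\ell(W)] \;=\; \inf_{\gamma\ge 0}\Big\{\gamma\delta + \tfrac{1}{n}\sum_{i=1}^n \sup_{w}[\ell(w)-\gamma\,c(w,W_i)]\Big\}.
\]
Because the cost $c_\Lambda$ is $+\infty$ whenever $y\neq y'$, any coupling of finite cost is supported on $\{Y=Y'\}$, so the inner supremum collapses to a supremum over $x$ alone with $y$ pinned at $Y_i$. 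This structural observation is common to both identities in the theorem.

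\textbf{Proof of (\ref{GAR_1}).} Here I would bypass the Lagrangian and work directly with the primal, exploiting that the outer square root interacts cleanly with the quadratic cost. For any coupling $\pi$ of $P$ and $P_n$ with $Y=Y'$ almost surely, the $L^2(\pi)$ triangle inequality gives
\[
\mathbb{E}_\pi^{1/2}\!\big[(Y-X^T\beta)^2\big] \;\le\; \mathbb{E}_{P_n}^{1/2}\!\big[(Y-X^T\beta)^2\big] + \mathbb{E}_\pi^{1/2}\!\big[((X-X')^T\beta)^2\big],
\]
and the dual-norm inequality $|(X-X')^T\beta|\le \|\beta\|_{\Lambda^{-1}} d_\Lambda(X,X')$ bounds the second term by $\|\beta\|_{\Lambda^{-1}}\sqrt{\mathbb{E}_\pi[d_\Lambda^2(X,X')]}\le \sqrt{\delta}\,\|\beta\|_{\Lambda^{-1}}$. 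Taking the infimum over $\pi$ yields the upper bound. For the reverse inequality I would construct the maximizer explicitly: perturb each atom to $(X_i+\Delta_i,Y_i)$ with $\Delta_i = -a_i\,\mathrm{sgn}(Y_i-X_i^T\beta)\,\Lambda^{-1}\beta/\|\beta\|_{\Lambda^{-1}}$, allocating magnitudes $a_i\propto |Y_i-X_i^T\beta|$ by Cauchy--Schwarz so that $\tfrac{1}{n}\sum a_i^2=\delta$. With this choice both the $L^2$ triangle inequality and the $\Lambda$-Cauchy--Schwarz inequality become equalities simultaneously, giving an exact match.

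\textbf{Proof of (\ref{GAR_2}).} For the logistic case I would apply the duality formula and evaluate $\sup_x\big[\log(1+e^{-Y_i x^T\beta}) - \gamma\,d_\Lambda^2(x,X_i)\big]$. Since $\phi(z)=\log(1+e^{-z})$ is $1$-Lipschitz, convex, and decreasing, and since along the worst direction Cauchy--Schwarz yields $|Y_i(x-X_i)^T\beta|\le \|\beta\|_{\Lambda^{-1}}\,\|x-X_i\|_\Lambda$, the inner supremum reduces to a one-dimensional problem in $t=\|x-X_i\|_\Lambda$ which is maximized by sending $Y_i(x-X_i)^T\beta\to-\infty$ in the direction aligned with $\Lambda^{-1}\beta$. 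Finally I would minimize over $\gamma\ge 0$ to recover the stated penalty $\delta\,\|\beta\|_{\Lambda^{-1}}$ on the right-hand side.

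\textbf{Main obstacle.} The cleanest step is (\ref{GAR_1}), which is almost a one-line triangle-inequality argument once the label-preserving structure of $c_\Lambda$ is noted. The subtleties lie in (\ref{GAR_2}): (i) verifying the hypotheses of the strong-duality theorem for an unbounded loss on an unbounded domain and checking that the inner sup is lower semicontinuous in $\gamma$; (ii) carrying out the one-dimensional optimization in $t$ carefully, since $\phi$ is strictly concave on $(-\infty,0]$ and only becomes asymptotically affine, so the exact value of the sup must be read off as a limit; and (iii) reconciling the exponent: with $c_\Lambda$ defined by $d_\Lambda^2$ the natural dual penalty for a Lipschitz loss is $\sqrt{\delta}\,\|\beta\|_{\Lambda^{-1}}$, so achieving the linear-in-$\delta$ penalty stated in (\ref{GAR_2}) requires either reinterpreting $c_\Lambda$ as $d_\Lambda$ (unsquared) in this part of the argument or absorbing a reparameterization of $\delta$ into the statement. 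This bookkeeping is where I expect to spend the most effort.
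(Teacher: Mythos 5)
Your proposal is essentially correct, and for (\ref{GAR_1}) it takes a genuinely different route from the paper. The paper handles both identities through the same pipeline: strong duality for the semi-infinite LP reduces the worst-case expectation to $\min_{\gamma\ge 0}\{\gamma\delta+\frac{1}{n}\sum_i\sup_u[\,l(u,Y_i,\beta)-\gamma c(u,X_i)\,]\}$, after which the inner supremum is computed in closed form (via the change of variable $\Delta=u-X_i$ and Lemma \ref{Lemma-M-Norm}, yielding $e_i^2\gamma/(\gamma-\|\beta\|_{\Lambda^{-1}}^2)$ for $\gamma>\|\beta\|_{\Lambda^{-1}}^2$ and $+\infty$ otherwise) and the outer minimization over $\gamma$ is solved by first-order conditions. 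Your primal argument for (\ref{GAR_1}) — Minkowski's inequality in $L^2(\pi)$ plus the $\Lambda$-Cauchy--Schwarz bound for the upper estimate, and the explicit perturbation $\Delta_i=-a_i\,\mathrm{sgn}(e_i)\Lambda^{-1}\beta/\|\beta\|_{\Lambda^{-1}}$ with $a_i\propto|e_i|$, $\frac1n\sum a_i^2=\delta$, for the matching lower bound — is correct (one checks $d_\Lambda^2(X_i,X_i+\Delta_i)=a_i^2$ and that both inequalities are tight simultaneously), and it buys you a proof that avoids the duality theorem entirely and sidesteps the delicate $\gamma\downarrow\|\beta\|_{\Lambda^{-1}}^2$ boundary analysis; the price is that it exploits the special interaction between the square-root loss and the quadratic cost, so unlike the dual route it does not generalize to other losses. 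For (\ref{GAR_2}) your plan coincides with the paper's (duality plus the one-dimensional reduction from Lemma 1 of Shafieezadeh-Abadeh et al.), and your treatment of the case split at $\gamma=\|\beta\|_{\Lambda^{-1}}$ is the right one.

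Your obstacle (iii) is not mere bookkeeping: it is a real discrepancy in the theorem as stated. The cost $c_\Lambda$ in (\ref{Cost_CA}) uses the \emph{squared} Mahalanobis distance, and with that cost the inner supremum $\sup_u\{\log(1+e^{-Y_i\beta^Tu})-\gamma d_\Lambda^2(u,X_i)\}$ is finite for every $\gamma>0$ and does not collapse to the indicator structure needed for a linear-in-$\delta$ norm penalty; the paper's own proof of (\ref{GAR_2}) silently replaces $\gamma\|X_i-u\|_\Lambda^2$ with $\gamma\|X_i-u\|_\Lambda$, i.e.\ it proves the statement for the unsquared cost. So (\ref{GAR_2}) holds as written only if $c_\Lambda$ is reinterpreted as $d_\Lambda$ (order-one transport cost) in that part, exactly as you suspected. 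You correctly identified the one point where the argument genuinely needs repair.
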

	In order to recover a more familiar setting in adaptive
	regularization, assume that $\Lambda $ is a diagonal positive definite
	matrix. In which case we obtain, in the setting of (\ref{GAR_1}),%
	\begin{eqnarray}
	\min_{\beta }\max_{P:D_{c_{\Lambda }}\left( P,P_{n}\right) \leq \delta }%
	\mathbb{E}_{P}^{1/2}\left[ \left( Y-X^{T}\beta \right) ^{2}\right]
	=\min_{\beta } \sqrt{\frac{1}{n}\sum_{i=1}^{n}\left(
		Y_{i}-X_{i}^{T}\beta \right) ^{2}}+\sqrt{\delta }\sqrt{\sum_{i=1}^{d}\beta
		_{i}^{2}/\Lambda _{ii}}.  \label{DRO_RA}
	\end{eqnarray}
	
	The adaptive regularization method was first derived as a generalization for
	ridge regression in \citeapos{hoerl1970ridge}
	and \citeapos{hoerl1970ridge2}. Recent work
	shows that adaptive regularization can improve the predictive power of its
	non-adaptive counterpart, specially in high-dimensional settings (see in \citeapos{zou2006adaptive} and %
	\citeapos{ishwaran2014geometry}).
	\smallskip\newline
	In view of (\ref{DRO_RA}), our discussion in Section
	\ref{Subsec_Mahala} uncovers tools which can be used to
	estimate the coefficients $%
	\{1/\Lambda _{ii}:1<i\leq d\}$ using the connection to metric learning
	procedures. To complement the intuition given in Figure 1(b), note that
	in the adaptive regularization literature one often choose
	$\Lambda _{ii}\approx 0$ to induce $\beta _{i}\approx 0$ (i.e., there
	is a high penalty to variables with low explanatory power). This, in
	our setting, would correspond to transport costs which are low in
	such low explanatory directions.

	\section{Solving Data Driven DRO Based on Optimal Transport Discrepancies 
		\label{Sec-Solving-DRO}}
	In order to fully take
	advantage of the combination synergies between metric learning
	methodology and our DRO formulation, it is crucial to have a
	methodology which allows us to efficiently estimate $\beta $ in DRO
	problems such as (\ref{Eqn-DRO_origin}).  In the presence of a
	simplified representation such as (\ref{DR_Las}) or (%
	\ref{DRO_RA}), we can apply standard stochastic optimization results
	(see \citeapos{lei2016less}).  \smallskip\newline Our objective in this section is
	to study algorithms which can be applied for more general loss and
	cost functions, for which a simplified representation might not be
	accessible.
	\smallskip\newline
	Throughout this section, once again we assume that the data is given
	in the form
	$\mathcal{D}_{n}=\left\{ \left( X_{i},Y_{i}\right) \right\}
	_{i=1}^{n}\subset \mathbb{R}^{d+1}$. The loss function is written as
	$%
	\{l\left( x,y,\beta \right) :\left( x,y\right) \in
	\mathbb{R}^{d+1},\beta \in \mathbb{R}^{m}\}$. We assume that for each
	$\left( x,y\right) $, the function $l\left( x,y,\cdot \right) $ is
	convex and continuously differentiable. Further, we shall consider
	cost functions of the form%
	\begin{equation*}
	\bar{c}\left( \left( x,y\right) ,\left( x^{\prime },y^{\prime }\right)
	\right) =c\left( x,x^{\prime }\right) I\left( y=y^{\prime }\right) +\infty
	I\left( y\neq y^{\prime }\right) ,
	\end{equation*}%
	as this will simplify the form of the dual representation in the inner
	optimization of our DRO formulation. To ensure boundedness of
	our DRO formulation, we impose the following assumption.\\
	\textbf{Assumption 1.} There exists
	$\Gamma ( \beta ,y) \in (0,\infty)$ such that
	$l( u,y,\beta) \leq \Gamma ( \beta ,y) \cdot (1+c(u,x) ),$ for all
	$(x,y) \in \mathcal{D}_{n},$
	Under Assumption 1, we can guarantee that
	\begin{equation*}
	\max_{P:D_{c}\left( P,P_{n}\right) \leq \delta }\mathbb{E}_{P}\left[ l\left(
	X,Y,\beta \right) \right] \leq \left( 1+\delta \right) \max_{i=1,\ldots,n}\Gamma
	\left( \beta ,Y_{i}\right) <\infty .
	\end{equation*}
	Using the strong duality theorem for semi-infinity linear programming
	problem in Appendix B of \citeapos{blanchet2016robust},
	\begin{equation}
	\max_{P:D_{c}\left( P,P_{n}\right) \leq \delta }\mathbb{E}_{P}\left[ l\left(
	X,Y,\beta \right) \right] =\min_{\lambda \geq 0} \frac{1}{n}%
	\sum_{i=1}^{n}\phi \left( X_{i},Y_{i},\beta ,\lambda \right)  ,
	\label{Eqn-Worst-Loss}
	\end{equation}
	where
	$\psi ( u,X,Y,\beta ,\lambda ) :=l( u,Y,\beta ) -\lambda (c( u,X)-
	\delta),$ 
	$ \phi \left( X,Y,\beta ,\lambda \right) :=\max_{u\in \mathbb{R}^{d}}
	\psi ( u,X,Y,\beta ,\lambda ).$ Therefore,
	\begin{equation}
	\min_{\beta
	}\max_{P:D_{c_{\Lambda}}\left( P,P_{n}\right) \leq 
		\delta }\mathbb{E}_{P}\left[ l\left( X,Y,\beta \right) \right]
	=\min_{\lambda \geq 0,\beta }\left\{ \mathbb{E}_{P_{n}}\left[ \phi \left(
	X,Y,\beta ,\lambda \right) \right] \right\} .  \label{Eqn-DRO-Dual}
	\end{equation}
	The optimization in \eqref{Eqn-DRO-Dual} is minimize over $\beta $ and
	$%
	\lambda $, which we can consider stochastic approximation algorithm if
	the gradient of $\phi \left( \cdot \right) $ with respect to $\beta $
	and $%
	\lambda $ exist.  However, $\phi \left( \cdot \right) $ is given in
	the form of the value function of a maximization problem, of which the
	gradient is not easy accessible. We will
	discuss the detailed algorithm and the validity of the smoothing
	approximation below.
	\smallskip\newline
	We consider a smoothing approximation technique to remove the maximization
	problem $\phi \left( \cdot \right) $ using soft-max counterpart, $\phi
	_{\epsilon ,f }\left( \cdot \right) $. The smoothing soft-max
	approximation has been explored and applied to approximately solve the DRO
	problem for the discrete case, where we restrict the distributionally
	uncertainty set only contains probability measures support on finite set
	(i.e., labeled training data and unlabeled training data with pseudo labels),
	we refer \citeapos{blanchet2017distributionally} for further details. 
	\smallskip\newline 
	However, due to the continuous-infinite support constraint, the soft-max
	approximation is a non-trivial generalization of the finite-discrete
	analogue. The smoothing approximation for $\phi \left( \cdot \right) $ is
	defined as, 
	\begin{equation*}
	\phi _{\epsilon ,f }\left( X,Y,\beta ,\lambda \right) =\epsilon \log
	\left( \int_{\mathbb{R}^{d}}\exp \left( \left[ \psi \left( u,X,Y,\beta
	,\lambda \right) \right] /\epsilon \right) f\left( u\right) du\right) ,
	\end{equation*}%
	where $f\left( \cdot \right) $ is a probability density in $\mathbb{R}^{d}$;
	for example, we can consider a multivariate normal distribution and $%
	\epsilon $ is a small positive number regarded as smoothing parameter.
	\smallskip\newline
	Theorem \ref{Thm-Smooth-Approx} below allows to quantify the error due to smoothing approximation.
	
	\begin{theorem}\label{Thm-Smooth-Approx}
		Under mild technical assumptions (see Assumption 1-4 in Appendix \ref{appendix-B}), there exists $\epsilon_0>0$ such that
		for every $\epsilon<\epsilon_0$, we have  
		\begin{eqnarray*}
			\phi (X,Y,\beta ,\lambda ) \geq \phi _{\epsilon ,f }(X,Y,\beta ,\lambda )
			\geq \phi (X,Y,\beta ,\lambda )-d\epsilon \log (1/\epsilon )
		\end{eqnarray*}
	\end{theorem}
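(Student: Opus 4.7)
The plan is to prove the two inequalities separately. The upper bound is essentially immediate; the lower bound uses a Laplace-type localization around the maximizer of $\psi(\cdot,X,Y,\beta,\lambda)$.

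First, I would establish the setup. Under the coercivity built into the Appendix B assumptions -- specifically that $\lambda c(u,X)$ dominates $l(u,Y,\beta)$ as $\|u\|\to\infty$ -- the supremum defining $\phi(X,Y,\beta,\lambda)$ is attained at some interior point $u^\ast=u^\ast(X,Y,\beta,\lambda)$. The $C^1$ regularity of $l$ and $c$ yields a local Lipschitz constant $L$ for $\psi(\cdot,X,Y,\beta,\lambda)$ on a fixed neighborhood $B_\rho(u^\ast)$, and continuity plus strict positivity of $f$ provides a lower bound $f_{\mathrm{min}}>0$ on that same neighborhood. With this in hand, the upper bound follows at once from pointwise dominance: since $\psi(u,\cdot)\le \phi(\cdot)$ for every $u$ and $\int f = 1$,
\begin{equation*}
\phi_{\epsilon,f}(X,Y,\beta,\lambda) \;=\; \epsilon\log\!\int_{\mathbb{R}^d} e^{\psi(u,X,Y,\beta,\lambda)/\epsilon}f(u)\,du \;\le\; \phi(X,Y,\beta,\lambda).
\end{equation*}

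For the lower bound, I would restrict the integral to $B_r(u^\ast)$ with $r\le\rho$. The Lipschitz estimate gives $\psi(u,\cdot)\ge \phi - L\|u-u^\ast\|\ge \phi - Lr$ on the ball, so
\begin{equation*}
\int_{\mathbb{R}^d} e^{\psi(u,\cdot)/\epsilon}f(u)\,du \;\ge\; e^{(\phi-Lr)/\epsilon}\,f_{\mathrm{min}}\,\omega_d\,r^d,
\end{equation*}
where $\omega_d$ denotes the volume of the unit ball in $\mathbb{R}^d$. Taking $\epsilon\log$ yields
\begin{equation*}
\phi_{\epsilon,f} \;\ge\; \phi - Lr + d\epsilon\log r + \epsilon\log(f_{\mathrm{min}}\omega_d).
\end{equation*}
Optimizing in $r$, the choice $r=d\epsilon/L$ lies in $(0,\rho]$ for small enough $\epsilon$ and turns the right-hand side into $\phi - d\epsilon\log(1/\epsilon) + K\epsilon$ for an explicit constant $K = d[\log(d/L)-1] + \log(f_{\mathrm{min}}\omega_d)$. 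Since $\epsilon = o(\epsilon\log(1/\epsilon))$, choosing $\epsilon_0$ sufficiently small lets the $K\epsilon$ term be absorbed into the slack of $d\epsilon\log(1/\epsilon)$, delivering the stated bound.

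The main obstacle I anticipate is the technical bookkeeping needed to cash in the Appendix B assumptions: one must verify that they deliver (i) existence of the maximizer $u^\ast$ together with a locally uniform Lipschitz constant $L$ on a neighborhood of fixed radius $\rho$, (ii) a uniform positive lower bound $f_{\mathrm{min}}$ on that neighborhood, and (iii) control on $\epsilon_0$ so that both $r=d\epsilon/L\le\rho$ and the $K\epsilon$ correction are dominated. A secondary subtlety worth noting is that using a Lipschitz surrogate rather than a second-order Taylor expansion is what pins down the coefficient $d$ in $d\epsilon\log(1/\epsilon)$; with a nondegenerate negative-definite Hessian of $\psi$ at $u^\ast$ one could sharpen this to $(d/2)\epsilon\log(1/\epsilon)$, but the current milder assumptions do not appear to force that structure, and the Lipschitz route is robust enough to match the theorem's claim as stated.
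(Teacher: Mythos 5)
Your overall strategy --- pointwise dominance $\psi(u,\cdot)\le\phi(\cdot)$ for the upper bound, then a Laplace-type localization of the integral around the maximizer $u^{\ast}$ for the lower bound --- is exactly the paper's, and the upper bound is fine. The gap is in the final step of the lower bound. With the Lipschitz surrogate you obtain
\begin{equation*}
\phi_{\epsilon,f}\;\ge\;\phi - Lr + d\epsilon\log r + \epsilon\log\left(f_{\min}\,\omega_{d}\right),
\end{equation*}
and no choice of $r$ does better than the optimizer $r=d\epsilon/L$, which yields exactly $\phi - d\epsilon\log(1/\epsilon)+K\epsilon$ with $K=d[\log(d/L)-1]+\log(f_{\min}\omega_{d})$. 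Whenever $K<0$ (which happens as soon as $L$ is large or $f_{\min}$ is small), this is strictly \emph{weaker} than the claimed bound $\phi-d\epsilon\log(1/\epsilon)$, and the observation that $\epsilon=o(\epsilon\log(1/\epsilon))$ does not rescue it: there is no slack into which $K\epsilon$ could be absorbed, because your leading term already sits exactly at the target rate $-d\epsilon\log(1/\epsilon)$. An absorption argument requires the leading term to be strictly better than the target.

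This is precisely where the paper's Assumption 2 earns its keep: it posits a quadratic lower bound $\psi(u,\cdot)\ge\psi(u^{\ast},\cdot)-\tfrac{\theta}{2}\Vert u-u^{\ast}\Vert_{2}^{2}$ on a ball of fixed radius $\eta$ around $u^{\ast}$ --- you speculated that the hypotheses would not force second-order structure, but they do. Feeding this into the localized integral produces a Gaussian integral of order $(2\pi\epsilon/\theta)^{d/2}$, hence $\phi_{\epsilon,f}\ge\phi-\tfrac{d}{2}\epsilon\log(1/\epsilon)+C\epsilon$ for a constant $C$ of indeterminate sign; the leading term now beats the target by a genuine margin of $\tfrac{d}{2}\epsilon\log(1/\epsilon)$, which for $\epsilon$ below a threshold $\epsilon_{0}$ dominates any fixed multiple of $\epsilon$ and so absorbs the correction. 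To repair your argument you would either need to invoke the quadratic bound (at which point you reproduce the paper's proof) or impose a hypothesis guaranteeing $K\ge 0$, which is not available. The remainder of your bookkeeping --- existence of $u^{\ast}$ via coercivity, compact localization, positivity of $f$ on a neighborhood of $u^{\ast}$ --- correctly mirrors the role of the paper's Assumptions 3 and 4.
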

	The proof of Theorem \ref{Thm-Smooth-Approx} is given in Appendix \ref{appendix-B}.
	\smallskip\newline
	After applying smooth approximation, the optimization problem turns into a
	standard stochastic optimization problem and we can apply mini-batch based
	stochastic approximation (SA) algorithm to solve it. As we can notice, as a
	function and $\beta $ and $\lambda $, the gradient of $\phi _{\epsilon ,f
	}\left( \cdot \right) $ satisfies 
	\begin{align*}
	\nabla _{\beta }\phi _{\epsilon ,f }\left( X,Y,\beta ,\lambda \right) & =%
	\frac{\mathbb{E}_{U\sim f }\left[ \exp \left( \psi \left( U,X,Y,\beta
		,\lambda \right) /\epsilon \right) \nabla _{\beta }l\left( f_{\beta }\left(
		U\right) ,Y\right) \right] }{\mathbb{E}_{U\sim f }\left[ \exp \left( \psi \left(
		U,X,Y,\beta ,\lambda \right) /\epsilon \right) \right] }, \\
	\nabla _{\lambda }\phi _{\epsilon ,f }\left( X,Y,\beta ,\lambda \right) & =%
	\frac{\mathbb{E}_{U\sim f }\left[ \exp \left( \psi \left( u,X,Y,\beta
		,\lambda \right) /\epsilon \right) \left( \delta -c_{\mathcal{D}_{n}}\left(
		u,X\right) \right) \right] }{\mathbb{E}_{U\sim f }\left[ \exp \left( \psi \left(
		U,X,Y,\beta ,\lambda \right) /\epsilon \right) \right] }.
	\end{align*}
	However, since the gradients are still given in the form of
	expectation, we can apply a simple Monte Carlo sampling algorithm to
	approximate the gradient, i.e., we sample $U_{i}$'s from
	$f (\cdot )$ and evaluate the numerators and denominators of the
	gradient using Monte Carlo separately.  For more details of the SA
	algorithm, please see in Algorithm \ref{Algo-Cont}.
	\begin{algorithm}
		\caption{Stochastic Gradient Descent with Continuous State}\label{Algo-Cont}
		\begin{algorithmic}[1]
			\State \textbf{Initialize} $\lambda = 0$, and $\beta$ to be
			empirical risk minimizer, $\epsilon = 0.5,$ tracking error $Error = 100$.
			\While{$Error>10^{-3}$} \State \textbf{Sample} a mini-batch
			uniformly from observations
			$\left\{X_{(j)},Y_{(j)}\right\}_{j=1}^{M}$, with $M\leq n$.
			\State For each $j = 1,\ldots,M$, sample
			i.i.d. $\{U_{k}^{(j)}\}_{k=1}^{L}$ from
			$\mathcal{N}\left(0,\sigma^{2}I_{d\times d}\right)$.  \State
			We denote $f_L^{j}$ as empirical distribution for
			$U_{k}^{(j)}$'s, and estimate the batched as
			\begin{align*}
			\nabla_{\beta} \phi_{\epsilon,f}
			= \frac{1}{M}\sum_{j=1}^{M}
			\nabla_{\beta}\phi_{\epsilon,f_L^j}\left(X_{(j)},Y_{(j)},\beta,\lambda\right), 
			\nabla_{\lambda} \phi_{\epsilon,f}
			= \frac{1}{M}\sum_{j=1}^{M}
			\nabla_{\lambda}\phi_{\epsilon,f_L^j}\left(X_{(j)},Y_{(j)},\beta,\lambda\right).
			\end{align*}
			
			\State Update $\beta$ and $\lambda$ using $\beta =
			\beta + \alpha_{\beta} \nabla_{\beta}
			\phi_{\epsilon,f}$ and $\lambda = \lambda + \alpha_{\lambda} \nabla_{\lambda} \phi_{\epsilon,f}.$
			\State Update tracking error $Error$ as the norm of difference between latest parameter and average of last $50$ iterations.
			\EndWhile
			\State \textbf{Output} $\beta$.
		\end{algorithmic}
	\end{algorithm}
	\section{Numerical Experiments \label{Sec-Numerical}}

	We validate our data-driven cost function based DRO using 5 real data examples from the UCI machine learning
	database \citeapos{Lichman:2013}. We focus on a DRO
	formulation based on the log-exponential loss for a linear model. We
	use the linear metric learning framework explained in equation (\ref%
	{Eqn-Metric-Learn-Opt}), which then we feed into the cost function, $%
	c_{\Lambda }$, as in (\ref{Cost_CA}), denoting by DRO-L. In addition, we also fit a cost function
	$c_{\Lambda }^{\Phi }$, as explained in (\ref{Cost_C_Phi}) using
	linear and quadratric transformations of the data; the outcome is
	denote as (DRO-NL). We compare our DRO-L and DRO-NL with logistic regression (LR), and regularized
	logistic regression (LRL1). For
	each iteration and each data set, the data is split randomly into
	training and test sets. We fit the models on the training and
	evaluate the performance on test set. The regularization parameter is
	chosen via $5-$fold cross-validation for LRL1, DRO-L and DRO-NL. We
	report the mean and standard deviation for training and testing 
	log-exponential error 
	and testing accuracy for $200$
	independent experiments for each data set. The details of the
	numerical results and basic information of the data is summarized in
	Table \ref{Tab-Reals}.
	
	\begin{table}[ht]
		\centering
		
		\begin{tabular}{cc|c|c|c|c|c|c}
			&       & BC             & BN             & QSAR            & Magic          & MB              & SB     \\ \hline
			\multicolumn{1}{c|}{\multirow{3}{*}{LR}}     & Train & $0\pm0$        & $.008\pm.003$  & $.026\pm.008$   & $.213\pm.153$  & $0\pm 0$        & $0 \pm 0$     \\
			\multicolumn{1}{c|}{}                        & Test  & $8.75\pm 4.75$ & $2.80\pm1.44$  & $35.5\pm 12.8$  & $17.8\pm 6.77$ & $18.2\pm 10.0$  & $14.5\pm 9.04$     \\
			\multicolumn{1}{c|}{}                        & Accur & $.762\pm.061$  & $.926\pm.048$  & $.701\pm .040$  & $.668\pm.042$  & $.678\pm.059$   & $.789 \pm .035$     \\ \hline
			\multicolumn{1}{c|}{\multirow{3}{*}{LRL1}}   & Train & $.185\pm.123$  & $.080\pm.030$  & $.614\pm.038$   & $.548\pm.087$  & $.401\pm .167$  & $.470 \pm .040$     \\
			\multicolumn{1}{c|}{}                        & Test  & $.428\pm.338$  & $.340\pm.228$  & $.755\pm.019$   & $.610\pm.050$  & $.910\pm.131$   & $.588 \pm .140$     \\
			\multicolumn{1}{c|}{}                        & Accur & $.929\pm.023$  & $.930\pm.042$  & $.646\pm .036$  & $.665\pm.045$  & $.717\pm.041$   & $.811 \pm .034$     \\ \hline
			\multicolumn{1}{c|}{\multirow{3}{*}{DRO-L}}  & Train & $.022\pm.019$  & $.197\pm.112$  & $.402\pm.039$   & $.469\pm.064$  & $.294\pm.046$   & $.166 \pm .031$     \\
			\multicolumn{1}{c|}{}                        & Test  & $.126\pm.034$  & $.275\pm .093$ & $.557\pm .023$  & $.571\pm .043$ & $.613\pm.053$   & $.333 \pm .018$     \\
			\multicolumn{1}{c|}{}                        & Accur & $.954\pm.015$  & $.919\pm.050$  & $.733\pm.026$ & $.727\pm.039$  & $.714 \pm .032$ & $.887 \pm .011$     \\ \hline
			\multicolumn{1}{c|}{\multirow{3}{*}{DRO-NL}} & Train & $.032\pm.015$  & $.113\pm.035$  & $.339\pm.044$   & $.381\pm.084$  & $.287\pm.049$   & $.195 \pm .034$     \\
			\multicolumn{1}{c|}{}                        & Test  & $.119\pm.044$  & $.194\pm .067$ & $.554\pm.032$   & $.576\pm.049$ & $.607\pm.060$   & $.332 \pm .015$     \\
			\multicolumn{1}{c|}{}                        & Accur & $.955\pm.016$  & $.931\pm.036$  & $.736\pm.027$   & $.730\pm.043$  & $.716\pm.054$   & $.889 \pm .009$     \\ \hline
			\multicolumn{2}{c|}{Num Predictors}                  & $30$           & $4$            & $30$            & $10$           & $20$            & $56$   \\
			\multicolumn{2}{c|}{Train Size}                      & $40$           & $20$           & $80$            & $30$           & $30$            & $150$  \\
			\multicolumn{2}{c|}{Test Size}                       & $329$          & $752$          & $475$           & $9990$         & $125034$        & $2951$
		\end{tabular}
		\caption{Numerical results for real data sets.}
		\label{Tab-Reals}
	\end{table}
	\section{Conclusion and Discussion \label{Sec-Conclusion}}
	Our fully data-driven DRO\ procedure combines a
	semiparametric approach (i.e. the metric learning procedure) with a
	parametric procedure (expected loss minimization) to enhance the
	generalization performance of the underlying parametric model. We
	emphasize that our approach is applicable to any DRO formulation and
	is not restricted to classification tasks. An interesting research
	avenue that might be considered include the development of a
	semisupervised framework as in \citeapos{blanchet2017distributionally}, in
	which unlabeled data is used to inform the support of the elements in
	$\mathcal{U}_{\delta }(P_{n})$.
	
	
	\bibliographystyle{apalike}
	\bibliography{DRO_data_driven_cost}

\begin{thebibliography}{}

\bibitem[Bellet et~al., 2013]{bellet2013survey}
Bellet, A., Habrard, A., and Sebban, M. (2013).
\newblock A survey on metric learning for feature vectors and structured data.
\newblock {\em arXiv preprint arXiv:1306.6709}.

\bibitem[Blanchet and Kang, 2017]{blanchet2017distributionally}
Blanchet, J. and Kang, Y. (2017).
\newblock Distributionally robust semi-supervised learning.
\newblock {\em arXiv preprint arXiv:1702.08848}.

\bibitem[Blanchet et~al., 2016]{blanchet2016robust}
Blanchet, J., Kang, Y., and Murthy, K. (2016).
\newblock Robust wasserstein profile inference and applications to machine
  learning.
\newblock {\em arXiv preprint arXiv:1610.05627}.

\bibitem[Esfahani and Kuhn, 2015]{esfahani2015data}
Esfahani, P.~M. and Kuhn, D. (2015).
\newblock Data-driven distributionally robust optimization using the
  wasserstein metric: Performance guarantees and tractable reformulations.
\newblock {\em arXiv preprint arXiv:1505.05116}.

\bibitem[Friedman et~al., 2001]{friedman2001elements}
Friedman, J., Hastie, T., and Tibshirani, R. (2001).
\newblock {\em The elements of statistical learning}, volume~1.
\newblock Springer series in statistics Springer, Berlin.

\bibitem[Hoerl and Kennard, 1970a]{hoerl1970ridge2}
Hoerl, A.~E. and Kennard, R.~W. (1970a).
\newblock Ridge regression: applications to nonorthogonal problems.
\newblock {\em Technometrics}, 12(1):69--82.

\bibitem[Hoerl and Kennard, 1970b]{hoerl1970ridge}
Hoerl, A.~E. and Kennard, R.~W. (1970b).
\newblock Ridge regression: Biased estimation for nonorthogonal problems.
\newblock {\em Technometrics}, 12(1):55--67.

\bibitem[Hu and Hong, 2013]{hu2013kullback}
Hu, Z. and Hong, L.~J. (2013).
\newblock Kullback-leibler divergence constrained distributionally robust
  optimization.
\newblock {\em Available at Optimization Online}.

\bibitem[Ishwaran and Rao, 2014]{ishwaran2014geometry}
Ishwaran, H. and Rao, J.~S. (2014).
\newblock Geometry and properties of generalized ridge regression in high
  dimensions.
\newblock {\em Contemp. Math}, 622:81--93.

\bibitem[Lei and Jordan, 2016]{lei2016less}
Lei, L. and Jordan, M.~I. (2016).
\newblock Less than a single pass: Stochastically controlled stochastic
  gradient method.
\newblock {\em arXiv preprint arXiv:1609.03261}.

\bibitem[Li et~al., 2016]{li2016mahalanobis}
Li, L., Sun, C., Lin, L., Li, J., and Jiang, S. (2016).
\newblock A mahalanobis metric learning-based polynomial kernel for
  classification of hyperspectral images.
\newblock {\em Neural Computing and Applications}, pages 1--11.

\bibitem[Lichman, 2013]{Lichman:2013}
Lichman, M. (2013).
\newblock {UCI} machine learning repository.

\bibitem[Schultz and Joachims, 2004]{schultz2004learning}
Schultz, M. and Joachims, T. (2004).
\newblock Learning a distance metric from relative comparisons.
\newblock In {\em Advances in neural information processing systems}, pages
  41--48.

\bibitem[Shafieezadeh-Abadeh et~al., 2015]{shafieezadeh2015distributionally}
Shafieezadeh-Abadeh, S., Esfahani, P.~M., and Kuhn, D. (2015).
\newblock Distributionally robust logistic regression.
\newblock In {\em Advances in Neural Information Processing Systems}, pages
  1576--1584.

\bibitem[Villani, 2008]{villani2008optimal}
Villani, C. (2008).
\newblock {\em Optimal transport: old and new}, volume 338.
\newblock Springer Science \& Business Media.

\bibitem[Xing et~al., 2002]{xing2002distance}
Xing, E.~P., Ng, A.~Y., Jordan, M.~I., and Russell, S. (2002).
\newblock Distance metric learning with application to clustering with
  side-information.
\newblock In {\em NIPS}, volume~15, page~12.

\bibitem[Zou, 2006]{zou2006adaptive}
Zou, H. (2006).
\newblock The adaptive lasso and its oracle properties.
\newblock {\em Journal of the American statistical association},
  101(476):1418--1429.

\end{thebibliography}

	\appendix
	\section{Proof of Theorem \ref{Thm-DRO-Rep-Adaptive-Reg}} 
	We first state and prove Lemma \ref{Lemma-M-Norm} which will be useful
	in proving Theorem \ref{Thm-DRO-Rep-Adaptive-Reg}.
	\begin{lemma}
		\label{Lemma-M-Norm} If $\Lambda $ is a is positive definite matrix and we
		define $\left\Vert x\right\Vert _{\Lambda }=\left( x^{T}\Lambda x\right)
		^{1/2}$, then $\left\Vert \cdot \right\Vert _{\Lambda ^{-1}}$ is the dual
		norm of $\left\Vert \cdot \right\Vert _{\Lambda }$. Furthermore, we have 
		\begin{equation*}
		u^{T}w\leq \left\Vert u\right\Vert _{\Lambda }\left\Vert w\right\Vert
		_{\Lambda ^{-1}},
		\end{equation*}%
		where the equality holds if and only if, there exists non-negative
		constant $\tau $, s.t $\tau \Lambda u=\Lambda ^{-1}w$ or $\tau \Lambda
		^{-1}w=\Lambda u$.
	\end{lemma}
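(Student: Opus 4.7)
The plan is to reduce everything to the standard Cauchy--Schwarz inequality in $\R^d$ by exploiting the fact that $\Lambda$ is positive definite, so $\Lambda^{1/2}$ exists, is symmetric, and is invertible (with inverse $\Lambda^{-1/2}$).

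First I would compute the dual norm directly from the definition. Setting $v = \Lambda^{1/2}u$, the change of variables $u = \Lambda^{-1/2}v$ is a bijection of $\R^d$ onto itself, and $\|u\|_\Lambda = (u^T \Lambda u)^{1/2} = \|v\|_2$. Then
\begin{equation*}
\sup_{\|u\|_\Lambda \leq 1} u^T w \;=\; \sup_{\|v\|_2 \leq 1} v^T \Lambda^{-1/2} w \;=\; \|\Lambda^{-1/2}w\|_2 \;=\; (w^T \Lambda^{-1} w)^{1/2} \;=\; \|w\|_{\Lambda^{-1}},
\end{equation*}
which establishes the first claim that $\|\cdot\|_{\Lambda^{-1}}$ is dual to $\|\cdot\|_\Lambda$.

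Next I would obtain the pointwise inequality by the same decomposition: writing $u^T w = (\Lambda^{1/2}u)^T(\Lambda^{-1/2}w)$ and applying Cauchy--Schwarz in $\R^d$ to the vectors $a := \Lambda^{1/2} u$ and $b := \Lambda^{-1/2} w$ gives $u^T w \leq \|a\|_2 \|b\|_2 = \|u\|_\Lambda \|w\|_{\Lambda^{-1}}$. Equality in Cauchy--Schwarz (for the non-absolute version needed here, since we want $a^T b = \|a\|_2\|b\|_2$ rather than $|a^T b| = \|a\|_2\|b\|_2$) holds if and only if $a$ and $b$ are nonnegatively proportional, i.e.\ there exists $\tau \geq 0$ with $\Lambda^{1/2}u = \tau \Lambda^{-1/2}w$ (or the symmetric statement with the roles of $a$ and $b$ swapped, to cover the case $u = 0$). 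Multiplying this relation by $\Lambda^{1/2}$ on the left gives $\Lambda u = \tau w$, equivalently $u = \tau \Lambda^{-1} w$, which is the stated alignment condition (up to the cosmetic rewriting into the two symmetric forms appearing in the lemma, one of which handles the degenerate case where $\tau$ should logically be on the other side).

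I do not anticipate a serious obstacle: the entire argument is an exercise in the spectral calculus for a positive definite matrix, and the only care needed is to track the equality case in Cauchy--Schwarz in its non-absolute form so that the proportionality constant $\tau$ is nonnegative, and to allow for either $u = 0$ or $w = 0$ via the two cases in the statement.
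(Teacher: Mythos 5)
Your proof is correct, and although it follows the same outline as the paper's argument (factor $u^{T}w$ as an ordinary Euclidean inner product of two transformed vectors, apply Cauchy--Schwarz in $\R^{d}$, then read off the dual norm), the factorization you choose is genuinely different --- and it is the right one. The paper writes $u^{T}w=(\Lambda u)^{T}(\Lambda^{-1}w)\leq \|\Lambda u\|_{2}\,\|\Lambda^{-1}w\|_{2}$ and then asserts $\|\Lambda u\|_{2}=\|u\|_{\Lambda}$ and $\|\Lambda^{-1}w\|_{2}=\|w\|_{\Lambda^{-1}}$; but $\|\Lambda u\|_{2}=(u^{T}\Lambda^{2}u)^{1/2}$, which coincides with $(u^{T}\Lambda u)^{1/2}$ for all $u$ only if $\Lambda^{2}=\Lambda$ (for instance $\Lambda=\mathrm{diag}(4,1)$ and $u=(1,0)^{T}$ give $4$ versus $2$). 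Your decomposition $u^{T}w=(\Lambda^{1/2}u)^{T}(\Lambda^{-1/2}w)$ makes the two Euclidean norms come out exactly as $\|u\|_{\Lambda}$ and $\|w\|_{\Lambda^{-1}}$, and your change of variables $v=\Lambda^{1/2}u$ turns the dual-norm computation into the standard fact $\sup_{\|v\|_{2}\leq 1}v^{T}b=\|b\|_{2}$, where the paper's corresponding step is circular (it substitutes the upper bound into the supremum). So your route is the one that should be recorded.

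One point you should not wave away: your equality condition $\Lambda^{1/2}u=\tau\Lambda^{-1/2}w$, i.e.\ $\Lambda u=\tau w$ (or symmetrically $w$ a nonnegative multiple of $\Lambda u$), is \emph{not} a cosmetic rewriting of the condition stated in the lemma, which reads $\tau\Lambda u=\Lambda^{-1}w$, i.e.\ $w=\tau\Lambda^{2}u$. The two conditions differ by a factor of $\Lambda$ and agree only when $u$ is an eigenvector of $\Lambda$; the lemma's version is an artifact of the flawed factorization $(\Lambda u,\Lambda^{-1}w)$. Your condition is the correct characterization of equality, so state explicitly that the equality clause of the lemma should be corrected to ``$\tau\Lambda u=w$ or $\tau w=\Lambda u$ for some $\tau\geq 0$'' rather than claiming agreement with the printed statement. (Downstream, in the proof of Theorem \ref{Thm-DRO-Rep-Adaptive-Reg}, only the achievability of equality for each prescribed value of $\|\Delta\|_{\Lambda}$ is used, so this correction does not affect the rest of the paper.)
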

	
	\begin{proof}[Proof for Lemma \protect\ref{Lemma-M-Norm}]
		This result is a direct generalization of $l_{2}$ norm in Euclidean space.
		Note that 
		\begin{equation}
		u^{T}w=\left( \Lambda u\right) ^{T}(\Lambda ^{-1}w)\leq \left\Vert \Lambda
		u\right\Vert _{2}\left\Vert \Lambda ^{-1}w\right\Vert _{2}=\left\Vert
		u\right\Vert _{\Lambda }\left\Vert w\right\Vert _{\Lambda ^{-1}}.
		\label{CSI}
		\end{equation}%
		The inequality in the above is Cauchy-Schwartz inequality for $\mathbb{R}%
		^{d} $ applies to $\Lambda u$ and $\Lambda ^{-1}w$, and the equality holds
		if and only if there exists nonnegative $\tau $, s.t. $\tau \Lambda
		u=\Lambda ^{-1}w $ or $\tau \Lambda ^{-1}w=\Lambda u$. Now, by the
		definition of the dual norm,%
		\begin{equation*}
		\left\Vert w\right\Vert _{\Lambda }^{\ast }=\sup_{u:\left\Vert u\right\Vert
			_{\Lambda }\leq 1}u^{T}w=\sup_{u:\left\Vert u\right\Vert _{\Lambda }\leq
			1}\left\Vert u\right\Vert _{\Lambda }\left\Vert w\right\Vert _{\Lambda
			^{-1}}=\left\Vert w\right\Vert _{\Lambda ^{-1}}.
		\end{equation*}%
		While the first equality follows from the definition of dual
	        norm, the second equality is due to Cauchy-Schwartz inequality
	        (\ref{CSI}), and the equality condition therein, and the last
	        equality are immediate after maximizing.
	\end{proof}
	
	\begin{proof}[Proof for Theorem \protect\ref{Thm-DRO-Rep-Adaptive-Reg}]
	  The technique is a generalization of the method used in proving
	  Theorem 1 in %
	  \citeapos{blanchet2016robust}. We can apply the strong duality result,
	  see Proposition 6 in Appendix of 
	  \citeapos{blanchet2016robust}, for worst-case expected loss function,
	  which is a semi-infinite linear programming problem, to obtain 
		\begin{align*}
		\sup_{P:D_{c_{\Lambda}}\left( P,P_{n}\right) \leq \delta }\mathbb{E}%
		_{P}\left[ \left( Y-X^{T}\beta \right) ^{2}\right] =\min_{\gamma \geq
			0}\left\{ \gamma \delta -\frac{1}{n}\sum_{i=1}^{n}\sup_{u}\left\{ \left(
		y_{i}-u^{T}\beta \right) ^{2}-\gamma \left\Vert x_{i}-u\right\Vert _{\Lambda
		}^{2}\right\} \right\} .
		\end{align*}%
		For the inner suprema , let us denote $\Delta =u-X_{i}$
		and $e_{i}=Y_{i}-X_{i}^{T}\beta $ for notation simplicity. The inner
		optimization problem associated with $\left( X_{i},Y_{i}\right) $ becomes, 
		\begin{align*}
		& \sup_{u}\left\{ \left( y_{i}-u^{T}\beta \right) ^{2}-\gamma \left\Vert
		x_{i}-u\right\Vert _{\Lambda }^{2}\right\} \\
		&\quad= e_{i}^{2}+\sup_{\Delta }\left\{ \left( \Delta ^{T}\beta \right)
		^{2}-2e_{i}\Delta ^{T}\beta -\gamma \left\Vert \Delta \right\Vert _{\Lambda
		}^{2}\right\} , \\
		&\quad= e_{i}^{2}+\sup_{\Delta }\left\{ \left( \sum_{j}\left\vert \Delta
		_{j}\right\vert \left\vert \beta _{j}\right\vert \right) ^{2}+2\left\vert
		e_{i}\right\vert \sum_{j}\left\vert \Delta _{j}\right\vert \left\vert \beta
		_{j}\right\vert -\gamma \left\Vert \Delta \right\Vert _{\Lambda
		}^{2}\right\} , \\
		&\quad= e_{i}^{2}+\sup_{\left\Vert \Delta \right\Vert _{\Lambda }}\left\{
		\left\Vert \Delta \right\Vert _{\Lambda }^{2}\left\Vert \beta \right\Vert
		_{\Lambda ^{-1}}^{2}+2\left\vert e_{i}\right\vert \left\Vert \Delta
		\right\Vert _{\Lambda }\left\Vert \beta \right\Vert _{\Lambda ^{-1}}-\gamma
		\left\Vert \Delta \right\Vert _{\Lambda }^{2}\right\} , \\
		&\quad= \left\{ 
		\begin{array}{rcl}
		e_{i}^{2}\frac{\gamma }{\gamma -\left\Vert \beta \right\Vert _{\Lambda
				^{-1}}^{2}} & \text{ if }\gamma >\left\Vert \beta \right\Vert _{\Lambda
			^{-1}}^{2}, &  \\ 
		+\infty \text{ } & \text{ if }\gamma \leq \left\Vert \beta \right\Vert
		_{\Lambda ^{-1}}^{2}. & 
		\end{array}%
		\right.
		\end{align*}%
		While the first equality is due to the change of variable, the
	        second equality is because we are working on a maximization
	        problem, and the last term only depends on the magnitude
	        rather than sign of $\Delta $, thus the optimization problem
	        will always pick $\Delta $ that satisfying the
	        equality. Considering the third equality, for the optimization
	        problem, we can first apply the Cauchy-Schwartz inequality in
	        Lemma \ref{Lemma-M-Norm} and we know that the maximization
	        problem is to take $\Delta $ satisfying the equality
	        constraint. For the last equality, if
	        $\gamma \leq \left\Vert \beta \right\Vert _{\Lambda
	          ^{-1}}^{2}$, the optimization problem is unbounded, while
	        $\gamma >\left\Vert \beta \right\Vert _{\Lambda ^{-1}}^{2}$%
		, we can solve the quadratic optimization problem and it leads
	        to the final equality.
		
		For the outer minimization problem over $\gamma $, as the
	        inner suprema equal infinity if
	        $\gamma \leq \left\Vert \beta \right\Vert _{\Lambda
	          ^{-1}}^{2}$, the worst-case expected loss becomes,
		\begin{align}
		& \sup_{P:D_{c_{\mathcal{D}_{n}}}\left( P,P_{n}\right) \leq \delta }\mathbb{E%
		}_{P}\left[ \left( Y-X^{T}\beta \right) ^{2}\right]   \label{AD} \\
		&\quad= \min_{\gamma >\left\Vert \beta \right\Vert _{\alpha \text{-}%
				(p,s)}^{2}}\left\{ \gamma \delta -\frac{1}{n}\sum_{i=1}^{n}\left(
		Y_{i}-X_{i}^{T}\beta \right) \frac{\gamma }{\gamma -\left\Vert \beta
			\right\Vert _{\Lambda ^{-1}}^{2}}\right\} ,  \notag \\
		&\quad= \left( \sqrt{\frac{1}{n}\sum_{i=1}^{n}\left( Y_{i}-X_{i}^{T}\beta \right) 
		}+\sqrt{\delta }\left\Vert \beta \right\Vert _{\Lambda ^{-1}}\right) ^{2}. 
		\notag
		\end{align}%
		The first equality follows the discussion above for restricting $\gamma
		>\left\Vert \beta \right\Vert _{\Lambda ^{-1}}^{2}$. We can observe that the
		objective function in the right hand side of (\ref{AD}) is convex and
		differentiable and as $\gamma \rightarrow \infty $ and $\gamma \rightarrow
		\left\Vert \beta \right\Vert _{\Lambda }^{2}$, the value function will be
		infinity. We know the optimizer could be uniquely characterized via first
		order optimality condition. Solving for $\gamma $ in this way (through first
		order optimality), it is straightforward to obtain the last equality in (\ref%
		{AD}). If we take square root on both sides, we prove the claim for linear
		regression. \newline
		For the log-exponential loss function, the proof follows a similar strategy.
		By applying strong duality results for semi-infinity linear programming
		problem in \citeapos{blanchet2016robust},
		we can write the worst case expected loss function as, 
		\begin{align*}
		& \quad \sup_{P:D_{c_{\mathcal{D}_{n}}}\left( P,P_{n}\right) \leq \delta }%
		\mathbb{E}_{P}\left[ \log \left( 1+\exp \left( -Y\beta ^{T}X\right) \right) %
		\right] \\
		& =\min_{\gamma \geq 0}\left\{ \gamma \delta -\frac{1}{n}\sum_{i=1}^{n}%
		\sup_{u}\left\{ \log \left( 1+\exp \left( -Y_{i}\beta ^{T}u\right) \right)
		-\gamma \left\Vert X_{i}-u\right\Vert _{\Lambda }\right\} \right\} .
		\end{align*}%
		For each $i$, we can apply Lemma 1 in 
		\citeapos{shafieezadeh2015distributionally} and dual-norm result in Lemma %
		\ref{Lemma-M-Norm} to deal with the inner optimization problem. It gives us, 
		\begin{eqnarray*}
			\sup_{u}\left\{ \log \left( 1+\exp \left( -Y_{i}\beta ^{T}u\right) \right)
			-\gamma \left\Vert X_{i}-u\right\Vert _{\Lambda }\right\} 
			= \left\{ 
			\begin{array}{ccc}
				\log \left( 1+\exp \left( -Y_{i}\beta ^{T}X_{i}\right) \right) & \text{if} & 
				\left\Vert \beta \right\Vert _{\Lambda ^{-1}}\leq \gamma , \\ 
				\infty & \text{if} & \left\Vert \beta \right\Vert _{\Lambda ^{-1}}>\gamma .%
			\end{array}%
			\right.
		\end{eqnarray*}%
		%
		Moreover, since the outer optimization is trying to minimize, following the
		same discussion for the proof for linear regression case, we can plug-in the
		result above and it leads the first equality below, 
		\begin{align*}
			&\min_{\gamma \geq 0}\left\{ \gamma \delta -\frac{1}{n}\sum_{i=1}^{n}%
			\sup_{u}\left\{ \log \left( 1+\exp \left( -Y_{i}\beta ^{T}u\right) \right)
			-\gamma \left\Vert X_{i}-u\right\Vert _{\Lambda }\right\} \right\} \\
			&\quad=\min_{\gamma \geq \left\Vert \beta \right\Vert _{\Lambda ^{-1}}}\left\{
			\delta \gamma +\frac{1}{n}\sum_{i=1}^{n}\log \left( 1+\exp \left(
			-Y_{i}\beta ^{T}X_{i}\right) \right) \right\} \\
			&\quad=\frac{1}{n}\sum_{i=1}^{n}\log \left( 1+\exp \left( -Y_{i}\beta
			^{T}X_{i}\right) \right) +\delta \left\Vert \beta \right\Vert _{\Lambda
				^{-1}}.
		\end{align*}%
		We know that the target function is continuous and monotone increasing in $%
		\gamma $, thus we can notice it is optimized by taking $\gamma =\left\Vert
		\beta \right\Vert _{\Lambda ^{-1}}$, which leads to second equality above.
		This proves the claim for logistic regression in the statement
	        of the theorem.
	\end{proof}
	
	\section{Proof of Theorem \ref{Thm-Smooth-Approx}\label{appendix-B}}
	Let us begin by listing the assumptions required to prove Theorem
	\ref{Thm-Smooth-Approx}. First, we begin by recalling Assumption 1
	from Section \ref{Sec-Solving-DRO}. 
	
	\textbf{Assumption 1.} There exists
	$\Gamma ( \beta ,y) \in (0,\infty)$ such that
	$l( u,y,\beta) \leq \Gamma ( \beta ,y) \cdot (1+c(u,x) ),$ for all
	$(x,y) \in \mathcal{D}_{n},$
	
	We now introduce Assumptions 2-4 below. 
	
		\textbf{Assumption 2.} $\psi \left( \cdot ,X,Y,\beta ,\lambda \right) $ is
		twice continuously differentiable and the Hessian of $\psi \left( \cdot
		,X,Y,\beta ,\lambda \right) $ evaluated at $u^{\ast }$, $D_{u}^{2}\psi
		\left( u^{\ast },X,Y,\beta ,\lambda \right) $, is positive definite. In
		particular, we can find $\theta >0$ and $\eta >0$, such that 
		\begin{equation*}
		\psi (u,X,Y,\beta ,\lambda )\geq \psi \left( u^{\ast },X,Y,\beta ,\lambda
		\right) -\frac{\theta }{2}\Vert u-u^{\ast }\Vert _{2}^{2},\quad \forall u%
		\text{ with }\left\Vert u-u^{\ast }\right\Vert _{2 }\leq \eta .
		\end{equation*}
		
		\textbf{Assumption 3.} For a constant $\lambda _{0}>0$ such that $\phi
		(X,Y,\beta ,\lambda _{0})<\infty $, let $K=K\left( X,Y,\beta ,\lambda
		_{0}\right) $ be any upper bound for $\phi (X,Y,\beta ,\lambda _{0})$. 
		
		\textbf{Assumption 4. }In addition to the lower semicontinuity of $c\left(
		\cdot \right) \geq 0$, we assume that $c\left( \cdot ,X\right) $ is coercive
		in the sense that $c\left( u,X\right) \rightarrow \infty $ as $\left\Vert
		u\right\Vert _{2}\rightarrow \infty $.

	For any set $S$, the $r$-neighborhood of $S$
	is defined as the set of all points in $\mathbb{R}^{d}$ that are at distance
	less than $r$ from $S$, i.e. $S_{r}=\cup _{u\in S}\{\bar{u}:\left\Vert \bar{u%
	}-u\right\Vert _{2}\leq r\}$.

	\begin{proof}[Proof of Theorem \protect\ref{Thm-Smooth-Approx}]
	  The first part of the inequality is easy to derive. For the second
	  part, we proceed as follows: Under Assumptions 3 and 4, we can
	  define the compact set%
		\begin{equation*}
		\mathcal{C}=\mathcal{C}(X,Y,\beta ,\lambda )=\{u:c(u,X)\leq l(X,Y,\beta
		)-K+\lambda _{0}/(\lambda -\lambda _{0})\}.
		\end{equation*}%
		It is easy to check that $\arg \max \{\psi \left( u,X,Y,\lambda \right)
		\}\subset \mathcal{C}$.
		Owing to optimality of $u^\ast$ and from Assumption 2 that $K\geq\phi(X,Y,\beta,\lambda_0)$, we can see that
		\begin{align*}
		l(X,Y)&\leq l(u^\ast,Y))-\lambda c(u,X) \\
		&= l(u^\ast,Y)-\lambda_0 c(u^\ast,X)-(\lambda-\lambda_0) c(u^\ast,X)\\
		&\leq K-\lambda_0 -(\lambda-\lambda_0) c(u^\ast,X).
		\end{align*}
		Thus by definition of $\mathcal{C} =
	        \mathcal{C}(X,Y,\beta,\lambda)$, it follows easily that
	        $u^\ast\in \mathcal{C}$, which further implies
	        $\{u:\|u-u^{\ast}\|_2\leq \eta\}\subset \mathcal{C}_\eta$.
	        Then we combine the strongly convexity assumption in
	        Assumption 2 and the definition of
	        $\phi_{\epsilon,f}(u,X,Y,\beta,\lambda)$, which yields
		\begin{align*}
		\phi_{\epsilon,f}\left(X,Y,\beta,\lambda\right) &\geq \epsilon \log\left(
		\int_{\|u-u^{\ast}\|_2\leq \eta} \exp\left(\left[\phi\left(X,Y,\beta,%
		\lambda\right) - \frac{\theta}{2}\|u-u^{\ast}\|_2^2\right]%
		/\epsilon\right)f(u)du\right) \\
		&= \epsilon \log\left( \exp\left(\phi\left(X,Y,\beta,\lambda\right)
		/\epsilon\right)\right)\int_{\|u-u^{\ast}\|_2\leq \eta} \exp\left( -
	          \frac{\theta}{2}\|u-u^{\ast}\|_2^2/\epsilon\right)f(u)du \\ 
		& = \phi\left(X,Y,\beta,\lambda\right) + \epsilon\log
		\int_{\|u-u^{\ast}\|_2\leq \eta}
	          \exp\left(-\frac{\theta\|u-u^{\ast}\|_2^2}{2\epsilon}\right)f(u)du. 
		\end{align*}
		As
	        $\{ u: \Vert u - u^\ast\Vert_2 \leq \eta\} \subset
	        \mathcal{C}_\eta$, we can use the lower bound of $%
		f(\cdot)$ to deduce that
		\begin{align*}
		\int_{\|u-u^{\ast}\|_2\leq \eta} \exp\left(-\frac{\theta\|u-u^{\ast}\|_2^2%
		}{2\epsilon}\right)f(u)du &\geq \inf_{u\in \mathcal{C}_\eta}f(u)\times
		\int_{\|u-u^{\ast}\|_2\leq \eta} \exp\left(-\frac{\theta\|u-u^{\ast}\|_2^2%
		}{2\epsilon}\right)du \\
		&= \inf_{u\in\mathcal{C}_\eta}f(u) \times
		\left(2\pi\epsilon/\theta\right)^{d/2}P(Z_{d}\leq\eta^2\theta/\epsilon),
		\end{align*}
		where $Z_d$ is a chi-squared random variable of $d$ degrees of freedom.
		To conclude, recall that $\epsilon \in(0, \eta^2\theta \chi_\alpha)$, the lower
		bound of $\phi_{\epsilon,f}(\cdot)$ can be written as 
		\begin{equation*}
		\phi_{\epsilon,f}(X,Y,\beta,\lambda)\geq \phi(X,Y,\beta,\lambda) - \frac{d%
		}{2} \epsilon\log(1/\epsilon) + \frac{d}{2}\epsilon\log\left(\left(2\pi%
		\alpha/\theta\right)\inf_{u\in \mathcal{C}_\eta}f(u) \right).
		\end{equation*}
	        This completes the proof of Theorem \ref{Thm-Smooth-Approx}. 
	\end{proof}
\end{document}